\makeatletter\def\input@path{{tables/}{sections/}}\makeatother
\documentclass{article} %

\usepackage{iclr2027_conference,times}

\usepackage{float}
\usepackage[hidelinks]{hyperref}
\usepackage{url}
\usepackage{booktabs}
\usepackage{amsmath,amssymb}
\usepackage{graphicx}
\graphicspath{{figures/}}
\usepackage{microtype}
\usepackage{tabularx}
\usepackage{tikz}
\usetikzlibrary{positioning,calc,arrows}
\usetikzlibrary{arrows.meta}
\definecolor{trBlue}{HTML}{0072B2}
\definecolor{trGreen}{HTML}{00674C}
\definecolor{trVerm}{HTML}{D55E00}
\definecolor{trGrey}{HTML}{8C8C8C}
\definecolor{trInk}{HTML}{262626}

\usepackage{capt-of}
\usepackage{amsmath}
\usepackage{amssymb}
\usepackage{amsthm}
\title{Measure First, Then Allocate:\\Calibrated Rollout Budgets\\for MCP Agents in Reinforcement Learning}

\iclrfinalcopy
\author{Hieu Tran$^{1}$\quad Zonghai Yao$^{2}$\quad Sunjae Kwon$^{1}$\quad Hong Yu$^{1,2}$\\
\textnormal{$^{1}$University of Massachusetts Amherst\quad $^{2}$University of Massachusetts Lowell}\\
\textnormal{\texttt{hieutran@umass.edu}}}

\DeclareRobustCommand{\methodname}{\textsc{Brace}}
\usepackage{xcolor}
\usepackage{fancyhdr}
\newif\ifpendingused\pendingusedfalse

\makeatletter
\renewcommand\floatc@ruled[2]{{\@fs@cfont #1:} #2\par}
\makeatother
\floatstyle{ruled}
\newfloat{algorithm}{tb}{loa}
\floatname{algorithm}{Algorithm}

\newtheorem{proposition}{Proposition}

\begin{document}

\maketitle
\lhead{Preprint}
\suppressfloats[t]

\providecommand{\num}[1]{#1}

\begin{abstract}
Training tool-using language models requires costly interaction, yet identical
rewards within a rollout group provide no group-relative learning signal. We
use reward disagreement to guide harness evaluation and the allocation of
rollouts for policy training in Model Context Protocol (MCP) environments. We first introduce \textbf{Discord},
which tests harness edits using exact paired inference and decision-preserving
early stopping: only disagreeing paired outcomes favor one harness over another.
This insight motivates prospective allocation in
\textbf{\methodname{}} (\textbf{B}ank-calibrated \textbf{R}ollout \textbf{A}llocation under \textbf{C}orrelated \textbf{E}pisodes).
\methodname{} combines historical task evidence, uncertainty in task success
rates, and a correlated reward model to forecast mixed-reward groups before
allocating generation budget. Controlled harness experiments favor
\textbf{Discord}, while live MCP experiments expose a retention--cost trade-off.
At three Qwen3-VL model scales, \textbf{\methodname{}} exceeds the strongest plotted
baseline peaks with fewer new training rollouts; historical-bank collection is
counted separately. It also leads the compared methods on BFCL and NESTFUL without
further training. Ablations probe the bank, estimator, and allocation score;
prospective evaluation checks forecast calibration. These
results support forecasting reward contrast before generation.
\footnote{Code can be accessed at: \url{https://anonymous.4open.science/r/brace-38DC}}
\end{abstract}

\providecommand{\trSq}[4]{%
  \filldraw[draw=#4, fill=#3, line width=0.4pt, rounded corners=0.4pt]
    ({#1-0.11},{#2-0.11}) rectangle ({#1+0.11},{#2+0.11});}
\providecommand{\trSqS}[4]{%
  \filldraw[draw=#4, fill=#3, line width=0.4pt, rounded corners=0.35pt]
    ({#1-0.09},{#2-0.09}) rectangle ({#1+0.09},{#2+0.09});}
\providecommand{\trPair}[4]{%
  \trSqS{#1}{5.37}{#3}{#2}\trSqS{#1}{5.11}{#4}{#2}}
\providecommand{\trGhost}[1]{%
  \foreach \gy in {5.37,5.11}{%
    \draw[draw=trGrey!60, line width=0.35pt, rounded corners=0.35pt, densely dotted]
      ({#1-0.09},{\gy-0.09}) rectangle ({#1+0.09},{\gy+0.09});}}
\providecommand{\trChip}[4]{%
  \filldraw[draw=#4, fill=#3, line width=0.35pt, rounded corners=0.3pt]
    ({#1-0.08},{#2-0.105}) rectangle ({#1+0.08},{#2+0.105});}
\providecommand{\trNil}[3]{%
  \draw[#3, line width=0.8pt, line cap=round] ({#1-0.09},{#2}) -- ({#1+0.09},{#2});}
\providecommand{\trBadOK}[3]{%
  \filldraw[draw=#3, fill=white, line width=0.55pt] ({#1},{#2}) circle (0.13);
  \draw[#3, line width=0.7pt, line cap=round, line join=round]
    ({#1-0.059},{#2+0.005}) -- ({#1-0.016},{#2-0.047}) -- ({#1+0.068},{#2+0.059});}
\providecommand{\trBadX}[3]{%
  \filldraw[draw=#3, fill=#3, line width=0.3pt] ({#1},{#2}) circle (0.13);
  \draw[white, line width=0.85pt, line cap=round]
    ({#1-0.059},{#2-0.059}) -- ({#1+0.059},{#2+0.059})
    ({#1-0.059},{#2+0.059}) -- ({#1+0.059},{#2-0.059});}
\providecommand{\trNum}[3]{%
  \filldraw[draw=trBlue, fill=trBlue, line width=0.3pt] ({#1},{#2}) circle (0.135);
  \node[font=\scriptsize\bfseries, text=white, inner sep=0pt] at ({#1},{#2}) {#3};}
\providecommand{\trCyl}[3]{%
  \path[draw=#3!70, fill=#3!12, line width=0.45pt]
    ({#1-0.125},{#2+0.115}) -- ({#1-0.125},{#2-0.115})
    arc[start angle=180, end angle=360, x radius=0.125, y radius=0.048]
    -- ({#1+0.125},{#2+0.115})
    arc[start angle=0, end angle=180, x radius=0.125, y radius=0.048] -- cycle;
  \draw[draw=#3!70, fill=#3!28, line width=0.45pt]
    ({#1},{#2+0.115}) ellipse (0.125 and 0.048);
  \draw[#3!45, line width=0.3pt]
    ({#1-0.125},{#2+0.005}) arc[start angle=180, end angle=360, x radius=0.125, y radius=0.048];}
\providecommand{\trGauge}[3]{%
  \path[draw=#3!70, fill=#3!10, line width=0.45pt]
    ({#1-0.135},{#2-0.055}) arc[start angle=180, end angle=0, radius=0.135] -- cycle;
  \foreach \a in {155,90,25}{\draw[#3!70, line width=0.35pt]
    ({#1},{#2-0.055}) ++(\a:0.098) -- ++(\a:0.037);}
  \draw[#3!90, line width=0.55pt, line cap=round]
    ({#1},{#2-0.055}) -- ++(52:0.115);
  \fill[#3!90] ({#1},{#2-0.055}) circle (0.024);}
\providecommand{\trStack}[2]{%
  \filldraw[draw=trBlue!70, fill=trBlue!32, line width=0.32pt]
    ({#1-0.125},{#2-0.115}) rectangle ({#1+0.125},{#2-0.055});
  \filldraw[draw=trBlue!70, fill=trBlue!20, line width=0.32pt]
    ({#1-0.125},{#2-0.030}) rectangle ({#1+0.045},{#2+0.030});
  \filldraw[draw=trBlue!70, fill=trBlue!10, line width=0.32pt]
    ({#1-0.125},{#2+0.055}) rectangle ({#1-0.025},{#2+0.115});}
\providecommand{\trShield}[3]{%
  \path[draw=#3!85, fill=#3!12, line width=0.45pt]
    ({#1},{#2+0.135}) -- ({#1+0.100},{#2+0.092}) -- ({#1+0.100},{#2-0.028})
    .. controls ({#1+0.100},{#2-0.098}) and ({#1+0.050},{#2-0.122}) .. ({#1},{#2-0.150})
    .. controls ({#1-0.050},{#2-0.122}) and ({#1-0.100},{#2-0.098}) .. ({#1-0.100},{#2-0.028})
    -- ({#1-0.100},{#2+0.092}) -- cycle;
  \draw[#3!85, line width=0.6pt, line cap=round, line join=round]
    ({#1-0.046},{#2+0.010}) -- ({#1-0.012},{#2-0.030}) -- ({#1+0.056},{#2+0.050});}
\providecommand{\trClus}[3]{%
  \filldraw[draw=#3!85, fill=#3!35, line width=0.3pt, rounded corners=0.25pt]
    ({#1-0.115},{#2-0.02}) rectangle ({#1-0.015},{#2+0.08});
  \filldraw[draw=#3!85, fill=white, line width=0.3pt, rounded corners=0.25pt]
    ({#1+0.015},{#2-0.02}) rectangle ({#1+0.115},{#2+0.08});
  \filldraw[draw=#3!85, fill=#3!35, line width=0.3pt, rounded corners=0.25pt]
    ({#1-0.05},{#2-0.135}) rectangle ({#1+0.05},{#2-0.035});}

\DeclareRobustCommand{\trCap}[1]{\tikz[baseline=-0.62ex]{%
  \node[circle, fill=trBlue, text=white, inner sep=0.5pt, minimum size=1.35ex,
        font=\bfseries\tiny] {#1};}}

\begin{figure}[t]
\centering
\begin{tikzpicture}[
  font=\scriptsize,
  lane/.style     ={rounded corners=3pt, line width=0.4pt},
  slabA/.style    ={draw=trBlue!80, fill=trBlue!20, rounded corners=2pt, line width=0.4pt},
  slabtx/.style   ={rotate=90, font=\scriptsize, inner sep=0pt},
  hd/.style       ={anchor=west, font=\scriptsize, text=trInk, inner sep=0pt},
  sub/.style      ={anchor=west, font=\scriptsize, text=trGrey!75!trInk, inner sep=0pt},
  co/.style       ={anchor=west, font=\scriptsize, text=trInk!88, inner sep=0pt},
  rowlb/.style    ={anchor=east, font=\scriptsize, text=trInk!80, inner sep=0pt},
  title/.style    ={anchor=north west, font=\footnotesize\bfseries, text=trInk, inner sep=0pt},
  flowA/.style    ={-{Stealth[length=4pt,width=3.2pt]}, draw=trInk!62, line width=0.55pt},
  flowD/.style    ={-{Stealth[length=4pt,width=3.2pt]}, draw=trGreen!62, line width=0.55pt},
  bridge/.style   ={-{Stealth[length=4.6pt,width=3.6pt]}, draw=trBlue!80, line width=0.9pt},
  cut/.style      ={draw=trGreen!70, line width=0.5pt, densely dashed},
  fat/.style      ={-{Stealth[length=5.4pt,width=4.8pt]}, draw=trBlue!85, line width=1.6pt},
  loop/.style     ={-{Stealth[length=4pt,width=3.2pt]}, draw=trBlue!65, line width=0.55pt,
                    densely dotted},
  chipD/.style    ={draw=trGreen!75, fill=trGreen!10, rounded corners=2.5pt, line width=0.45pt,
                    align=center, inner sep=2pt, minimum width=1.85cm, minimum height=0.56cm,
                    font=\scriptsize, text=trInk},
  chipA/.style    ={draw=trBlue!75, fill=trBlue!10, rounded corners=2.5pt, line width=0.45pt,
                    align=center, inner sep=2pt, minimum width=1.75cm, minimum height=0.56cm,
                    font=\scriptsize}]

\useasboundingbox (0,0) rectangle (13.95,6.00);
\colorlet{laneA}{trBlue!4}
\colorlet{okA}{trBlue!80}\colorlet{edA}{trBlue!85}
\colorlet{dead}{trGrey!45}\colorlet{edD}{trGrey!85}
\colorlet{laneD}{trGreen!4}
\colorlet{okD}{trGreen!80}\colorlet{edG}{trGreen!85}
\colorlet{tie}{trGrey!45}\colorlet{edT}{trGrey!85}

\fill[laneD, lane] (0,4.76) rectangle (13.95,5.66);
\draw[trGreen!45, lane] (0,4.76) rectangle (13.95,5.66);
\node[title] at (0,6.00) {(a) \textsc{Discord}: test harness edits};

\node[rowlb] at (1.14,5.37) {incumbent};
\node[rowlb] at (1.14,5.11) {candidate};

\trPair{1.34}{edT}{tie}{tie}
\trPair{1.60}{edT}{white}{white}
\trPair{1.86}{edG}{white}{okD}
\trPair{2.12}{edT}{white}{white}
\trPair{2.38}{edT}{tie}{tie}
\trPair{2.64}{edG}{white}{okD}
\trPair{2.90}{edT}{white}{white}
\trPair{3.16}{edT}{tie}{tie}
\draw[cut] (3.32,4.92) -- (3.32,5.56);
\trGhost{3.46}\trGhost{3.72}

\draw[flowD] (3.92,5.24) -- (4.16,5.24);
\node[chipD] at (5.15,5.24) {exact test\\at $\alpha=.05$};
\draw[flowD] (6.16,5.24) -- (6.44,5.24);
\trBadOK{6.66}{5.24}{trGreen!85} \node[hd] at (6.86,5.24) {keep incumbent};

\trSqS{8.75}{5.36}{tie}{edT}\trSqS{8.75}{5.12}{tie}{edT}
\node[sub] at (8.95,5.24) {concordant = no signal};
\draw[cut] (11.15,5.06) -- (11.15,5.42);
\node[sub] at (11.33,5.24) {stop early once settled};

\node[co] at (6.95,4.60)
  {Discord's insight motivates forecasting useful rollout groups};

\fill[laneA, lane] (0,0.10) rectangle (13.95,4.12);
\draw[trBlue!45, lane] (0,0.10) rectangle (13.95,4.12);
\node[title] at (0,4.44) {(b) \methodname{}: forecast, then allocate};

\foreach \s [count=\j from 0] in {14,50,88}
  {\filldraw[draw=trBlue!55, fill=trBlue!\s, line width=0.35pt]
     (7.83+0.24*\j,4.18) rectangle (8.07+0.24*\j,4.40);}
\node[sub] at (8.71,4.29) {$P(\text{degenerate})$};
\trSq{10.50}{4.29}{okA}{edA}    \node[sub] at (10.66,4.29) {solved};
\trSq{11.57}{4.29}{white}{edA}  \node[sub] at (11.73,4.29) {unsolved};
\trSq{12.88}{4.29}{dead}{edD}   \node[sub] at (13.04,4.29) {all-equal};

\trNum{0.98}{3.82}{1}
\trCyl{1.30}{3.82}{trBlue}   \node[hd]  at (1.52,3.82) {warm bank};
                             \node[co]  at (1.52,3.58) {from earlier runs};
\trNum{3.48}{3.82}{2}
\trGauge{3.80}{3.82}{trBlue} \node[hd]  at (4.02,3.82) {calibrated forecast};
                             \node[co]  at (4.02,3.58) {models correlation};
\trNum{6.14}{3.82}{3}
\trStack{6.46}{3.82}         \node[hd]  at (6.68,3.82) {weighted task selection};
                             \node[co]  at (6.68,3.58) {chips show relative weight};
\trNum{9.56}{3.82}{4}
\trClus{9.88}{3.82}{trBlue}  \node[hd]  at (10.10,3.82) {budget lands where};
                             \node[co]  at (10.10,3.58) {signal is expected};

\draw[slabA] (3.20,0.66) rectangle (3.66,3.34);
  \node[slabtx, text=trBlue!85] at (3.43,2.00) {\bfseries forecast};
\draw[slabA] (5.30,0.66) rectangle (5.76,3.34);
  \node[slabtx, text=trBlue!85] at (5.53,2.00) {\bfseries allocate};
\draw[slabA] (8.04,0.66) rectangle (8.50,3.34);
  \node[slabtx, text=trBlue!85] at (8.27,2.00) {\bfseries generate};
\draw[flowA] (2.88,2.00) -- (3.16,2.00);
\draw[flowA] (4.98,2.00) -- (5.26,2.00);
\draw[flowA] (7.72,2.00) -- (8.00,2.00);

\foreach \y/\t in {3.10/1, 2.66/2, 2.22/3, 1.78/4, 1.34/5, 0.90/6}
  {\node[rowlb] at (0.72,\y) {task~\t};}

\foreach \c [count=\j from 0] in {okA,okA,okA,okA,okA,okA}   {\trSq{1.20+0.29*\j}{3.10}{\c}{edA}}
\foreach \c [count=\j from 0] in {okA,okA,okA,okA,white}     {\trSq{1.20+0.29*\j}{2.66}{\c}{edA}}
\foreach \c [count=\j from 0] in {okA,okA,white,okA,white}   {\trSq{1.20+0.29*\j}{2.22}{\c}{edA}}
\foreach \c [count=\j from 0] in {okA,white,okA,white,okA,white}
                                                            {\trSq{1.20+0.29*\j}{1.78}{\c}{edA}}
\foreach \c [count=\j from 0] in {okA,okA,white,okA}         {\trSq{1.20+0.29*\j}{1.34}{\c}{edA}}
\foreach \c [count=\j from 0] in {white,white,white,white,white}
                                                            {\trSq{1.20+0.29*\j}{0.90}{\c}{edA}}

\foreach \y/\s in {3.10/90, 2.66/64, 2.22/28, 1.78/8, 1.34/47}
  {\filldraw[draw=trBlue!55, fill=trBlue!\s, line width=0.4pt, rounded corners=0.7pt]
     (3.82,\y-0.105) rectangle (4.86,\y+0.105);}
\filldraw[draw=trGrey!70, fill=trGrey!35, line width=0.4pt, rounded corners=0.7pt]
  (3.82,0.795) rectangle (4.86,1.005);
\draw[trGrey!85, line width=0.7pt] (3.77,0.90) -- (4.91,0.90);

\trChip{5.96}{3.10}{trBlue!62}{edA}
\foreach \j in {0,...,2}{\trChip{5.96+0.195*\j}{2.66}{trBlue!62}{edA}}
\foreach \j in {0,...,6}{\trChip{5.96+0.195*\j}{2.22}{trBlue!62}{edA}}
\foreach \j in {0,...,8}{\trChip{5.96+0.195*\j}{1.78}{trBlue!62}{edA}}
\foreach \j in {0,...,4}{\trChip{5.96+0.195*\j}{1.34}{trBlue!62}{edA}}
\trShield{6.04}{0.90}{trBlue} \node[sub] at (6.24,0.90) {excluded task};

\trNil{8.75}{3.10}{trGrey!85}
\node[sub] at (9.03,3.10) {not selected};
\foreach \c [count=\j from 0] in {okA,okA,white,okA,white}   {\trSq{8.68+0.30*\j}{2.66}{\c}{edA}}
\foreach \c [count=\j from 0] in {okA,white,okA,okA,white}
                                                            {\trSq{8.68+0.30*\j}{2.22}{\c}{edA}}
\foreach \c [count=\j from 0] in {white,okA,okA,white,okA}
                                                            {\trSq{8.68+0.30*\j}{1.78}{\c}{edA}}
\foreach \c [count=\j from 0] in {dead,dead,dead,dead,dead}  {\trSq{8.68+0.30*\j}{1.34}{\c}{edD}}
\foreach \y in {2.66,2.22,1.78}{\trBadOK{11.44}{\y}{trBlue!85}}
\trBadX{11.44}{1.34}{trGrey!85}
\trNil{8.75}{0.90}{trGrey!85}

\draw[trBlue!35, line width=0.6pt] (11.67,1.34) -- (11.67,2.66);
\foreach \y in {2.66,2.22,1.78,1.34}
  {\draw[trBlue!45, line width=0.4pt] (11.57,\y) -- (11.67,\y);}
\draw[fat] (11.67,2.00) -- (12.05,2.00);
\node[chipA] (updA) at (12.96,2.00) {$\nabla$\,policy update};
\draw[loop] (12.96,1.72) -- (12.96,0.38) -- (0.88,0.38) -- (0.88,2.00) -- (1.07,2.00);
\node[font=\scriptsize, text=trBlue!85, fill=laneA, inner sep=1.5pt] at (6.90,0.38)
  {this cycle's outcomes};
\end{tikzpicture}

\caption{
Observed disagreement in \textsc{Discord} motivates predictive allocation in \methodname{}.
\emph{(a)} Paired disagreements determine retention evidence. Even two additional
candidate wins cannot pass the toy example's $\alpha=0.05$ test, so evaluation stops.
\emph{(b)} \methodname{} uses \trCap{1} historical outcomes to
\trCap{2} forecast reward unanimity and \trCap{3} weight eligible tasks before
\trCap{4} generating one fixed-size group per selected task. Chips show relative
sampling weights. The panels depict observed reward contrast for harness
retention and forecast reward contrast for task allocation; all-equal rollout
groups supply no group-relative reward signal.
}
\label{fig:overview}
\end{figure}

\section{Introduction}
Language agents use tools over multiple turns before a task's outcome can be
verified \citep{zhou2024archer,wang2026agent}. The Model Context Protocol (MCP)
standardizes how agent applications discover and invoke external tools and
access resources \citep{hou2026mcpsurvey,anjum2026promcp}. An \emph{agent harness} is the
model-external software that configures prompts and tools and mediates the
action--observation loop \citep{lee2026metaharness,yang2024sweagent}. Together with the model policy,
it forms the agent: changing prompts or tool access changes the harness,
whereas training model weights changes the policy.
Automated search can improve the harness
\citep{hu2025adas,zhang2025aflow,lee2026metaharness}, but deciding whether to
retain an edit requires evaluating its effect across costly agent trajectories.
An apparent gain on a finite task set may reflect noise. When the candidate and
incumbent run on the same tasks, agreements cannot favor either one; only
disagreements reveal the direction of the change. This motivates
\textsc{Discord}, which tests paired outcomes and stops evaluation when the
remaining tasks cannot change the retention decision.

A similar allocation problem arises when training that policy. Reinforcement
learning with verifiable rewards (RLVR) uses task
outcomes from complete trajectories \citep{guo2025deepseekr1,wang2026agent}.
Group Relative Policy Optimization (GRPO) \citep{shao2024deepseekmath} compares
several rollouts of the same task, but a group in which all rewards agree has
zero centered reward advantage. Generating such groups is especially costly
when each rollout may involve repeated model inference and MCP tool calls.
DAPO discards them after generation \citep{yu2025dapo}; PLR prioritizes tasks
by estimated learning potential \citep{jiang2021plr}, and VIP, TRACE, and
DUET allocate generation prospectively
\citep{nguyen2026vip,zou2026trace,hu2026duet}. For multi-turn tool-use tasks,
we ask whether outcomes from earlier runs can predict which fixed-size groups
will contain mixed rewards, despite uncertain task success rates and correlated
rollouts. \methodname{} uses that forecast to allocate generation before the
next group is sampled, forming an adaptive curriculum based on the expected
availability of a group-relative learning signal.

We study two complementary routes to improving tool-using agents.
\textsc{Discord} uses evaluation compute to decide whether to retain a
harness edit without changing model weights. Its focus on informative disagreement motivates \methodname{}
to forecast which training tasks will produce mixed-reward groups and allocate
rollouts accordingly. This directs training compute toward the policy operating
within the harness (\hyperref[fig:overview]{Figure~\ref*{fig:overview}}).
Our contributions are:
\begin{itemize}
\item \textbf{Reliable harness retention.} \textsc{Discord} uses exact paired
inference and decision-preserving stopping; controlled and live experiments
measure its recovery and evaluation cost.
\item \textbf{Prospective task selection.} \methodname{} combines bounded
historical evidence, uncertainty over task success, and within-group dependence
to forecast informative groups. We characterize the resulting objective and
test its calibration and components.
\item \textbf{Training efficiency and transfer.} Across three model scales,
\methodname{} improves held-out tool-use performance and reaches the strongest
plotted baseline peaks with fewer new rollouts. It also transfers to BFCL
\citep{patil2025bfcl} and NESTFUL \citep{basu2024nestful} without further training.
\end{itemize}

\section{Problem Formulation}
\label{sec:formulation}
\subsection{Group degeneracy in RLVR}
\label{sec:formulation-degen}
For task $i$, let $r_{i1},\ldots,r_{ik}\in\{0,1\}$ be verifier rewards for $k$
rollouts. The centered reward component of the GRPO \citep{shao2024deepseekmath} advantage is
\begin{equation}
A_{ij}=r_{ij}-\bar r_i,\qquad
\bar r_i=\frac{1}{k}\sum_{j'=1}^{k}r_{ij'}.
\label{eq:group-advantage}
\end{equation}
Any within-group normalization leaves the following degeneracy condition unchanged:
\begin{equation}
r_{i1}=\cdots=r_{ik}\quad\Longrightarrow\quad A_{ij}=0\quad\forall j.
\label{eq:degen}
\end{equation}
This condition concerns the reward-based policy-gradient term; other regularizers
may still contribute to an update. Non-degeneracy is necessary for that term to be
informative, but does not guarantee a beneficial update.
We quantify degeneracy in a bank snapshot of $482{,}510$ rollouts on $1{,}493$ verifier-scored tasks from Agent
World Model (AWM) \citep{wang2026agent}, a suite of SQLite-backed MCP environments
scored against the final database state. It includes the $1{,}127$-task training pool
and additional tasks from the same substrate. Reconstructing original optimizer
steps and retaining groups with at least two rollouts yields $92{,}706$ groups,
of which $85.2\%$ are degenerate.

\begin{algorithm}[tb]
\caption{\methodname{} allocation cycle: forecast and select tasks before generation.}
\label{alg:brace}

{\footnotesize
\noindent
\textbf{Inputs:}
task pool $\mathcal{T}$, excluded set $\mathcal{D}$, counts $(s_i,f_i)$,
bank $\mathcal{B}$, group size $k$, task budget $N\le|\mathcal{T}\setminus\mathcal{D}|$, discount $\gamma$,
correlation $\hat{\rho}$, temperature $\tau$, exploration rate $\epsilon$.

\begin{enumerate}\itemsep1pt\parskip0pt\topsep2pt
\item
\textbf{Update evidence:}
on the first cycle, set $(s_i,f_i)$ to capped counts from $\mathcal{B}$.
On later cycles, apply
$(s_i,f_i)\leftarrow\gamma(s_i,f_i)+(\Delta s_i,\Delta f_i)$
using the preceding cycle's outcomes exactly once.

\item
\textbf{Forecast informative groups:}
for each $i\in\mathcal{T}\setminus\mathcal{D}$, use
$p_i\sim\mathrm{Beta}(1+s_i,1+f_i)$ and
$\nu=(1-\hat{\rho})/\hat{\rho}$ to compute
$w_i=1-\mathbb{E}[q(p_i)]-\mathbb{E}[q(1-p_i)]$
via Equations~\ref{eq:bb} and~\ref{eq:corr-w}.

\item
\textbf{Allocate budget:}
sample $N-n_\epsilon$ tasks without replacement with probability
$\propto w_i^{1/\tau}$, where
$n_\epsilon=\mathrm{round}(\epsilon N)$, and sample the remaining
$n_\epsilon$ uniformly from unselected eligible tasks.
Tasks in $\mathcal{D}$ receive no budget.

\item
\textbf{Generate and update:}
generate one group of $k$ rollouts for each selected task, perform the scheduled policy updates,
and record $(\Delta s_i,\Delta f_i)$ for the next cycle's evidence update.
\end{enumerate}
}
\end{algorithm}

\begin{figure}[tb]
\centering
\includegraphics[width=\textwidth]{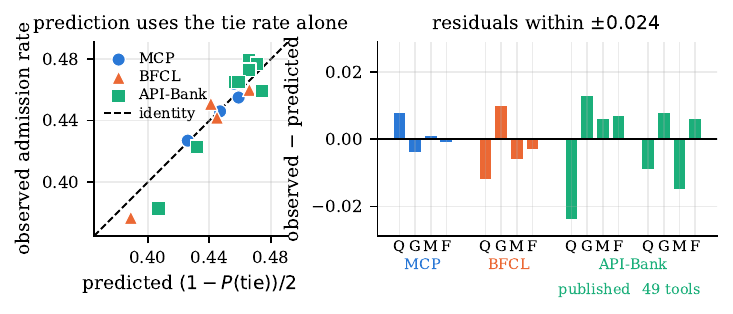}
\caption{Inert-edit admission across AWM \citep{wang2026agent}, BFCL \citep{patil2025bfcl}, and two API-Bank \citep{li2023apibank} tool surfaces.
Left: predictions from independently measured tie rates versus observed admission.
Right: residuals, all within $\pm0.024$ (Appendix~\ref{app:identity}).}
\label{fig:identity}
\end{figure}

\subsection{Paired outcomes in harness evolution}
\label{sec:formulation-paired}

Harness evolution compares an incumbent configuration $A$ with a candidate $B$ by
evaluating both on the same tasks. Pairing controls for task difficulty and allows the
decision to depend directly on how the two configurations differ on identical inputs.
For binary outcomes $(A_i,B_i)\in\{0,1\}^2$, define the two discordant counts
\begin{equation}
b=\#\{i:A_i=0,B_i=1\},\qquad
c=\#\{i:A_i=1,B_i=0\}.
\label{eq:bc}
\end{equation}
The paired difference in mean reward can then be written as
\begin{equation}
\bar B-\bar A=\frac{b-c}{n},
\end{equation}
because pairs with $(A_i,B_i)=(0,0)$ or $(1,1)$ cancel exactly. Thus, only
discordant pairs favor one harness over the other; $n_d=b+c$ counts informative
pairs.

These identities distinguish two statistical objects. \textsc{Discord} uses
paired disagreements to support a harness-retention decision; \methodname{}
forecasts whether a same-task group will mix successes and failures, enabling a
group-relative reward signal. The shared principle is to allocate interactions
toward outcomes that can affect the next decision or update.

\section{\textsc{Discord}: Exact Harness Retention}
\label{sec:retention}
\label{sec:seq}
\textsc{Discord} evaluates proposed harness edits, holding model weights fixed.
It retains an edit only when paired outcomes provide sufficient evidence of
improvement; it does not generate proposals.

\begin{figure}[tb]
\centering
\begin{minipage}[t]{0.51\textwidth}
\vspace{0pt}
\centering
\includegraphics[width=\linewidth]{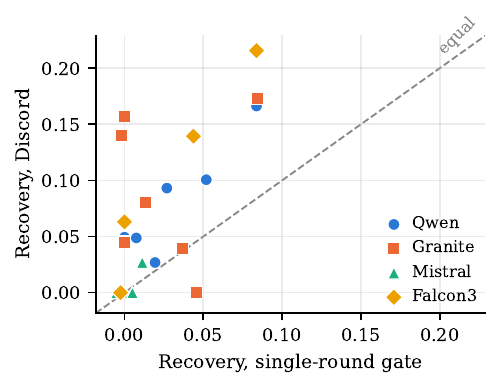}
\caption{Harness recovery across four callers: $18/21$ paired loops favor
\textsc{Discord}. The dashed line marks equal recovery.}
\label{fig:div}
\end{minipage}\hfill
\begin{minipage}[t]{0.47\textwidth}
\vspace{0pt}
\captionof{table}{Live AWM \citep{wang2026agent} retention and cost. Accepted edits are classified as
good/bad by measured effect; objective and episode counts report realized outcomes.}
\label{tab:discord-live}
\centering\footnotesize
\setlength{\tabcolsep}{2pt}
\begin{tabular*}{\linewidth}{@{\extracolsep{\fill}}llrrr@{}}
\toprule
Regime & Rule & \shortstack{Accepted\\(good/bad)} & \shortstack{Final\\objective} & Episodes \\
\midrule
Shallow & Deployed & $3$ ($2/1$) & $0.2656$ & $1{,}760$ \\
 & \textsc{Discord} & $0$ ($0/0$) & $0.2594$ & $1{,}906$ \\
Deep & Deployed & $3$ ($3/0$) & $0.2812$ & $1{,}920$ \\
 & \textsc{Discord} & $2$ ($2/0$) & $0.2703$ & $4{,}008$ \\
\bottomrule
\end{tabular*}

\par\smallskip
\raggedright
Shallow and deep regimes withhold $10\%$ and $40\%$ of tool names.
The full-surface objective is $0.2813$.
\end{minipage}
\end{figure}

\subsection{Why a positive measured gain is insufficient}
\label{sec:chance}
An \emph{inert edit} leaves true performance unchanged. If incumbent and candidate
scores are exchangeable, apparent improvements and regressions are equally likely:
\begin{equation}
P(\mathrm{retain})=P(\bar B>\bar A)
=\tfrac12\bigl(1-P(\mathrm{tie})\bigr).
\label{eq:chance}
\end{equation}
When ties are rare, strict-improvement retention accepts nearly half of inert
edits. Appendix~\ref{app:identity} gives the derivation and measurement protocol.

\subsection{Testing whether the candidate wins reliably}
\label{sec:discord-test}
Under the paired null, a disagreement is equally likely to favor the candidate
or the incumbent. Assuming independent evaluation pairs and conditioning on the
observed number of disagreements $n_d=b+c$, the number of candidate wins follows
McNemar's exact conditional distribution \citep{mcnemar1947}:
\begin{equation}
b\mid n_d\sim\mathrm{Binomial}(n_d,1/2).
\label{eq:discord-null}
\end{equation}
\textsc{Discord} retains the candidate when the upper-tail probability is at
most a prespecified $\alpha$. For example, $b=12$ and $c=4$ on $100$ paired
tasks give an $8$ pp improvement and an exact one-sided probability of $0.0384$,
passing at $\alpha=0.05$. With no disagreements, retention is rejected.
Operationally, each paired task either increments $b$, increments $c$, or ties
and changes neither count. Conditioning on $n_d$ removes the unknown tie rate:
the test asks whether the directions of observed disagreements favor the
candidate, independently of how often the harnesses agree.
Error control applies to this prespecified comparison, not an unrestricted
adaptive search (Appendix~\ref{app:gatefull}).

\label{sec:discord-budget}
\noindent\textbf{Stopping without changing the decision.}
The maximum number of paired tasks is fixed in advance. After each block,
\emph{exact curtailment} \citep{alling1963curtailed} checks whether all possible
remaining pairs (candidate wins, incumbent wins, or ties) lead to the same
terminal decision. If so, evaluation stops; otherwise it continues.

\begin{samepage}
\noindent For example, with $b=2$, $c=0$ and two pairs remaining, the most
favorable completion for the candidate is
$b=4$, $c=0$. Its one-sided probability is $1/16=0.0625$, so it still fails an
$\alpha=0.05$ test; the procedure can immediately keep the incumbent. This is
different from repeatedly stopping when a provisional $p$-value happens to cross
the threshold: exact curtailment always returns the decision that full evaluation
would have returned. Ambiguous comparisons can still require the entire budget
(Appendices~\ref{app:gatefull} and~\ref{app:seq}).
\end{samepage}

\noindent\textbf{From observed to predicted disagreement.}
\textsc{Discord} identifies when further paired outcomes cannot change a
harness decision. This suggests a training question: which tasks are likely to
yield mixed rewards before their rollouts are generated? \methodname{} forecasts
that signal from historical task outcomes and allocates the next budget
accordingly. We evaluate its forecasts and policy gains separately.

\begin{figure}[tb]
\centering
\includegraphics[width=\textwidth]{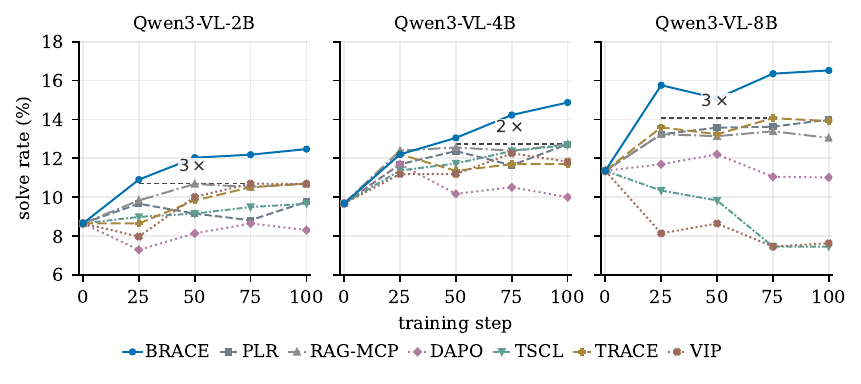}
\caption{Observed-checkpoint comparison of new rollouts ($160$ attempts/step).
For each scale, the horizontal dashed segment spans the earliest peak checkpoint
of the strongest plotted baseline and BRACE's first measured checkpoint that
meets or exceeds that solve rate. Labels give the baseline/BRACE rollout ratio
at those checkpoints; bank collection is excluded.
\hyperref[tab:main]{Table~\ref*{tab:main}} reports final-checkpoint results.}
\label{fig:stepcurve}
\end{figure}

\section{\methodname{}: Correlation-Aware Rollout Allocation}
\label{sec:allocation}
\label{sec:brace-alg}

\methodname{} selects tasks before generating their next rollout groups
(\hyperref[alg:brace]{Algorithm~\ref*{alg:brace}}). It favors tasks predicted to produce both successes
and failures while retaining uniform exploration. Main experiments select $N=256$
distinct tasks per allocation cycle, then generate one $k=5$ group per task over
eight optimizer steps of $32$ tasks each. Allocation changes task selection, not
group size. Thus, the score targets the chance of receiving a group-relative
reward signal, rather than the highest expected solve rate; the exploration
share continues to sample eligible tasks regardless of their scores.
\hyperref[alg:brace]{Algorithm~\ref*{alg:brace}} records each selected batch and its predicted
degeneracy before generation, enabling prospective calibration checks.

\subsection{Allocation Objective and Banked Evidence}
\label{sec:brace-obj}
\label{sec:brace-bank}

For task solve probability $p_i$, an independent group is degenerate if all $k$
rollouts succeed or all fail. The complementary probability of mixed rewards is
\begin{equation}
g(p_i,k)=1-p_i^k-(1-p_i)^k.
\label{eq:availability}
\end{equation}
Under independence, we account for uncertainty in $p_i$ by integrating over its posterior:
\begin{equation}
w_i^{\mathrm{ind}}=\mathbb E[g(p_i,k)],\qquad p_i\sim\mathrm{Beta}(a_i,b_i),
\quad \mathbb E[p_i^k]=\prod_{j=0}^{k-1}\frac{a_i+j}{a_i+b_i+j}.
\label{eq:w}
\end{equation}
Tasks with the same posterior mean can receive different scores when their
evidence counts differ.

\textbf{Banked evidence.} We approximate uncertainty in the task mean by
$p_i\sim\mathrm{Beta}(1+s_i,1+f_i)$ using success and failure counts $(s_i,f_i)$;
this is not the exact posterior under the correlated group likelihood.
A \emph{warm bank} provides success and failure counts from previous runs across
model scales, capped so a large historical log cannot dominate later evidence.
It is read once to initialize task-level evidence for allocation. Subsequently,
the counts are discounted and incremented with current-run outcomes; the bank
is not reread. Every rollout used for a policy update is generated by the
current policy. Without bank observations,
unseen tasks all start at $\mathrm{Beta}(1,1)$, so
allocation cannot initially distinguish tasks that rarely solve from those that
produce mixed outcomes. Bank counts make that distinction available before the
first current-run rollout; capping limits their influence as new evidence arrives.
Appendix~\ref{app:bank-provenance} gives the exact initialization and cost accounting.

\subsection{Modeling Correlated Rollout Groups}
\label{sec:brace-rho}

Rollout rewards within the same task group need not be independent: shared task
difficulty and sampling conditions can make the outcomes move together. We model
this dependence with an exchangeable Beta--Binomial distribution
with mean $p_i$ and concentration $\nu=(1-\rho)/\rho$, where $\rho$ is the
within-group correlation. Equivalently, for each group we draw a latent success
propensity $\theta_i\sim\mathrm{Beta}(\nu p_i,\nu(1-p_i))$, then draw the $k$
rewards conditionally independently given $\theta_i$. This group-level draw is
distinct from the posterior uncertainty over the task mean $p_i$. Under this model,
\begin{equation}
q(p_i)=P(\text{all }k\text{ succeed}\mid p_i)
=\prod_{j=0}^{k-1}\frac{\nu p_i+j}{\nu+j},
\label{eq:bb}
\end{equation}
and $q(1-p_i)$ is the probability of unanimous failure. The allocation score is
\begin{equation}
w_i=1-\mathbb E[q(p_i)]-\mathbb E[q(1-p_i)].
\label{eq:corr-w}
\end{equation}
Under the Beta evidence approximation and exchangeable group model, $w_i$ is
the posterior predictive probability that task $i$'s next group contains both
successes and failures.
For one group per selected task, $\sum_{i\in S}w_i$ is the model's expected
informative-group count (Appendix~\ref{app:yield-identity}). This is a
signal-availability objective, not a policy-improvement guarantee.
The independent case is recovered as $\nu\to\infty$. Positive correlation raises
the probability of unanimous groups; for example, at $p_i=0.5$, $k=5$, and
$\nu=0.285$, the mixed-reward probability is $0.23$, compared with $0.94$ under
independence. The expectations in Equation~\ref{eq:corr-w} are evaluated exactly
using Beta moments, without sampling (Appendix~\ref{app:rhofull}).

\subsection{Rank Preservation under Correlation Correction}
\label{sec:brace-rank}

\begin{proposition}
\label{prop:rankcal}
For $k\ge2$ and $\nu>0$, let
$f(p)=\prod_{j=0}^{k-1}(\nu p+j)/(\nu+j)$.
Then $g_c(p)=1-f(p)-f(1-p)$ is strictly increasing in $p(1-p)$.
Consequently, at fixed $k$ and shared $\nu$, known-probability scores $g_c(p)$,
$1-p^k-(1-p)^k$, and the Bernoulli variance $p(1-p)$ induce the same task ordering.
\end{proposition}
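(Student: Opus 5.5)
The plan is to exploit the symmetry $p\leftrightarrow 1-p$ and reduce monotonicity in $p(1-p)$ to strict convexity of $f$ on $[0,1]$. First, $g_c(p)=g_c(1-p)$ by construction, and $u=p(1-p)$ takes the same value exactly at $p$ and $1-p$. The map $p\mapsto u$ is a strictly increasing bijection from $[0,1/2]$ onto $[0,1/4]$. Hence $g_c$ factors as $g_c(p)=h(p(1-p))$ for a well-defined function $h$ on $[0,1/4]$. It then suffices to show that $g_c$ is strictly increasing in $p$ on $[0,1/2]$, since this makes $h$ strictly increasing.

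Second, I would establish the convexity. Each factor $(\nu p+j)/(\nu+j)$ is an affine function of $p$ with nonnegative coefficients, because $\nu>0$ and $j\ge 0$. So $f$ is a polynomial in $p$ with nonnegative coefficients. Its degree is exactly $k\ge 2$, with leading coefficient $\nu^k/\prod_j(\nu+j)>0$. Differentiating twice termwise, every term of $f''$ is nonnegative on $[0,1]$. The term coming from the leading monomial is strictly positive for $p>0$, so $f''>0$ on $(0,1]$ and $f'$ is strictly increasing on $[0,1]$.

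Third, I would differentiate: $g_c'(p)=f'(1-p)-f'(p)$. For $p\in[0,1/2)$ we have $1-p>p$, so $g_c'(p)>0$ by the strict monotonicity of $f'$. Thus $g_c$ is strictly increasing on $[0,1/2]$, and by the first step it is strictly increasing in $p(1-p)$. I expect this step to need the most care, specifically at the endpoint $p=0$ (where $f''$ may vanish). Strict monotonicity on $[0,1/2]$ still follows, since $f'$ is strictly increasing on the closed interval, so a positive derivative on the interior suffices.

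Finally, the same argument with $f(p)=p^k$ applies to $1-p^k-(1-p)^k$; this is also the $\nu\to\infty$ limit, and $p^k$ has nonnegative coefficients and degree $k\ge2$. That score is therefore also a strictly increasing function of $p(1-p)$. The Bernoulli variance is trivially such a function. Three strictly increasing transforms of the same quantity $p(1-p)$ induce identical orderings of tasks at fixed $k$ and shared $\nu$, which gives the consequence.
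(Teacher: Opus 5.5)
Your proof is correct and follows essentially the same route as the paper's: nonnegative polynomial coefficients with a positive leading term give $f''>0$ on $(0,1]$, hence $g_c'(p)=f'(1-p)-f'(p)>0$ for $p<\tfrac12$, and the symmetry $g_c(p)=g_c(1-p)$ together with the strictly increasing bijection $[0,\tfrac12]\to[0,\tfrac14]$ transfers monotonicity to $p(1-p)$. The differences are cosmetic: you reuse the coefficient argument for $p^k$, whereas the paper differentiates $p^k+(1-p)^k$ directly, and your composition argument makes the paper's remark about the $u$-derivative at $p=\tfrac12$ unnecessary.
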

For the weighted sampling step, the relative weight of eligible tasks $i$ and $j$
is $(w_i/w_j)^{1/\tau}$. A common rescaling of all scores cancels, while a nonlinear
transformation can change these ratios even when ranks agree. Posterior integration
can also change rankings because evidence counts differ across tasks. The mechanism
therefore depends on the estimated evidence and relative weights, alongside forecast
calibration. Appendix~\ref{app:proofs} proves the proposition;
Section~\ref{sec:components} connects it to sampling controls and training ablations.
The theorem concerns fixed group sizes.

\begin{table}[tb]
\centering\footnotesize
\renewcommand{\arraystretch}{0.95}
\setlength{\tabcolsep}{2.5pt}
\caption{Held-out AWM \citep{wang2026agent} results after $100$ training steps. Higher solve rate,
lower invalid-call rate, and fewer turns are better. Bold marks the best trained
method per metric and scale; evaluation uses four seeds.}
\vspace{4pt}

\begin{tabular*}{\textwidth}{@{\extracolsep{\fill}}l rrr rrr rrr@{}}
\toprule
 & \multicolumn{3}{c}{Qwen3-VL-$2$B} & \multicolumn{3}{c}{Qwen3-VL-$4$B} & \multicolumn{3}{c}{Qwen3-VL-$8$B} \\
\cmidrule(lr){2-4}\cmidrule(lr){5-7}\cmidrule(lr){8-10}
Method & solve & invalid & turns & solve & invalid & turns & solve & invalid & turns \\
 & (\%)\,$\uparrow$ & (\%)\,$\downarrow$ & $\downarrow$ & (\%)\,$\uparrow$ & (\%)\,$\downarrow$ & $\downarrow$ & (\%)\,$\uparrow$ & (\%)\,$\downarrow$ & $\downarrow$ \\
\midrule
\hspace{4pt} base model & $8.6$ & $16.6$ & $14.7$ & $9.7$ & $6.5$ & $8.4$ & $11.4$ & $6.8$ & $9.6$ \\
\hspace{4pt}\methodname{} & $\mathbf{12.5}$ & $19.3$ & $12.1$ & $\mathbf{14.9}$ & $4.6$ & $10.0$ & $\mathbf{16.5}$ & $9.0$ & $9.0$ \\
\hspace{4pt}\textsc{Plr} \citep{jiang2021plr} & $9.8$ & $20.0$ & $\mathbf{7.6}$ & $12.7$ & $6.9$ & $11.3$ & $14.0$ & $9.1$ & $10.2$ \\
\hspace{4pt}\textsc{Rag}-\textsc{Mcp} \citep{gan2025ragmcp} & $10.7$ & $29.4$ & $11.6$ & $12.7$ & $\mathbf{3.0}$ & $10.9$ & $13.1$ & $\mathbf{3.6}$ & $11.6$ \\
\hspace{4pt}\textsc{Dapo} \citep{yu2025dapo} & $8.3$ & $18.9$ & $14.2$ & $10.0$ & $6.3$ & $11.6$ & $11.0$ & $8.0$ & $12.3$ \\
\hspace{4pt}\textsc{Tscl} \citep{matiisen2020tscl} & $9.7$ & $\mathbf{10.4}$ & $10.3$ & $12.7$ & $5.5$ & $8.4$ & $7.5$ & $9.3$ & $\mathbf{3.7}$ \\
\hspace{4pt}\textsc{Trace} \citep{zou2026trace} & $10.7$ & $30.2$ & $9.1$ & $11.7$ & $6.9$ & $13.0$ & $13.9$ & $6.6$ & $10.3$ \\
\hspace{4pt}\textsc{Vip} \citep{nguyen2026vip} & $10.7$ & $14.3$ & $13.6$ & $11.8$ & $5.0$ & $\mathbf{6.7}$ & $7.6$ & $6.4$ & $11.0$ \\
\hspace{4pt}uniform \textsc{Grpo} \citep{shao2024deepseekmath} & $8.9$ & $29.9$ & $11.2$ & $11.4$ & $9.8$ & $10.1$ & $13.2$ & $13.0$ & $8.7$ \\
\bottomrule
\end{tabular*}
\par\addvspace{3pt}

\label{tab:main}
\end{table}

\section{Experiments}
\label{sec:triage-exp}

\subsection{Experimental setup}
\label{sec:triage-protocol}

\textbf{Models and baselines.}
We train Qwen3-VL-Instruct \citep{bai2025qwen3vl} at $2$B, $4$B, and $8$B using
GRPO \citep{shao2024deepseekmath} in \texttt{verl} \citep{sheng2025hybridflow}, with rank-$32$ LoRA.
Every method generates $160$ attempts per training step
($32$ prompts, $k=5$), counting discarded and incomplete attempts.
The AWM \citep{wang2026agent} training pool contains $1{,}127$ verifier-scored tasks
over $305$ live MCP servers under a fixed tool surface. Baselines include uniform
GRPO \citep{shao2024deepseekmath}, PLR \citep{jiang2021plr}, RAG-MCP \citep{gan2025ragmcp},
DAPO \citep{yu2025dapo}, TSCL \citep{matiisen2020tscl}, TRACE \citep{zou2026trace},
and VIP \citep{nguyen2026vip} (\hyperref[tab:main]{Table~\ref*{tab:main}}). Methods
share the model, optimizer, current-run rollout budget, and evaluation protocol;
\methodname{} additionally uses historical bank evidence.

\noindent\textbf{Eligibility checks.} We exclude $104$ of $1{,}127$ training tasks: $66$ whose verifier succeeds on the
unchanged initial state, $33$ proved unreachable by an initial-state guard, and
$5$ flagged by a string-matching heuristic, reported separately from certificates.
The guard never fires on the bank's $826$ tasks solved at least once; excluded
tasks produce no usable learning groups in historical data. These checks determine
eligibility before posterior weighting: failures alone do not prove impossibility.

\noindent\textbf{BRACE configuration.}
Historical counts are capped at $20$ successes and $40$ failures per task, with
$\gamma=1$. We use $\nu=0.285$ ($\hat\rho\approx0.78$) and $\tau=0.5$ at all
scales; uniform exploration is $\epsilon=0.05$ at $2$B/$4$B and $0.10$ at $8$B.
The correlation parameter is fitted once on the first two cycles of the $8$B
calibration run and then held fixed. Those cycles are excluded from forecast
evaluation. Appendix~\ref{app:bank-provenance} documents bank provenance and costs.

\noindent\textbf{Evaluation.} We evaluate on $295$
held-out AWM \citep{wang2026agent} tasks, excluded from training and bank initialization, using four
evaluation seeds. BFCL v4 \citep{patil2025bfcl} and NESTFUL
\citep{basu2024nestful} supply no AWM \citep{wang2026agent} training data. Main AWM \citep{wang2026agent} results use step
$100$; paired contrasts use matched task indices. 

\subsection{\textsc{Discord}: Harness Retention}
\label{sec:evo-main}
\label{sec:mcp-live}

\noindent\textbf{Inert-edit admission.}
Independently measured tie rates predict inert-edit admission within $\pm0.024$
across AWM \citep{wang2026agent}, BFCL \citep{patil2025bfcl}, and two API-Bank
\citep{li2023apibank} tool surfaces (\hyperref[fig:identity]{Figure~\ref*{fig:identity}}). On AWM
\citep{wang2026agent}, strict-improvement retention accepts $43.3\%$ of
$1{,}440$ inert edits, and $43.6\%$ of the accepted edits fail to replicate
(Appendices~\ref{app:gatefull} and~\ref{app:identity}).

\begin{figure}[!tb]
\centering
\includegraphics[width=\textwidth]{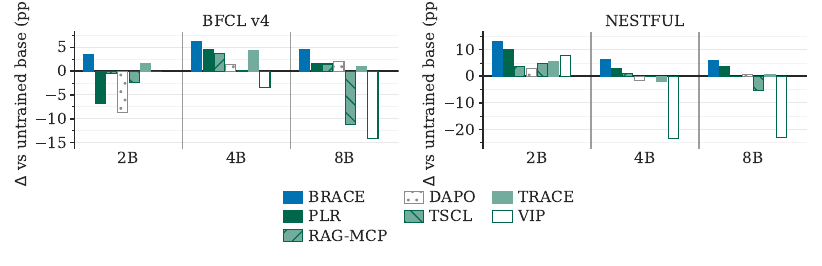}
\caption{Transfer to BFCL \citep{patil2025bfcl} and NESTFUL
\citep{basu2024nestful} without further training. Bars show percentage-point
changes from the corresponding untrained base model, not absolute rates.
\methodname{} has the largest gain among the plotted methods at each scale.}
\label{fig:generalization}
\end{figure}

\noindent\textbf{Controlled recovery and harmful proposals.}
We compare retention rules on $200$ BFCL v3 \citep{patil2025bfcl} single-turn tasks,
using four callers and eight rounds of three edits to
advertised function names. Withheld names create a recoverable handicap; a full
surface makes proposals predominantly harmful. Shared proposals and matched episode
budgets isolate retention from edit generation (Appendix~\ref{app:loopfull}).
The single-round gate uses a label-permutation test on disjoint task halves;
\textsc{Discord} uses discordant outcomes from the same tasks.
\textsc{Discord} recovers more of the handicap than the single-round gate in
$18$ of $21$ paired loops (\hyperref[fig:div]{Figure~\ref*{fig:div}}; one-sided paired
$t$-test, $p=0.00015$). Under harmful proposals, degradation occurs in $0/17$
seeds for \textsc{Discord} and $9/17$ for strict-improvement retention. The seeds
are matched; a two-sided exact McNemar test gives $p=0.0039$.

\noindent\textbf{Live MCP trade-off.}
On a shared proposal stream across $320$ tasks and $158$ AWM \citep{wang2026agent} servers,
shallow and deep handicaps withhold $10\%$ and $40\%$ of
tool names. \textsc{Discord} accepts no harmful edits in either regime, but uses
more evaluation episodes and reaches a lower final objective than the deployed
rule (\hyperref[tab:discord-live]{Table~\ref*{tab:discord-live}}). The deployed rule overstates its final
objective by $32\%$ and $47\%$, respectively. This trade-off favors
\textsc{Discord} when avoiding harmful edits warrants more evaluation; conservative
retention can miss useful edits under limited evidence.

\subsection{\texorpdfstring{Held-Out MCP Performance}{Held-Out MCP Performance}}
\label{sec:triage-results}

\methodname{} leads all baselines at $2$B, $4$B, and $8$B
(\hyperref[tab:main]{Table~\ref*{tab:main}}), solving $12.5\%$, $14.9\%$, and $16.5\%$ of tasks.
Uniform GRPO \citep{shao2024deepseekmath} reaches $8.9\%$, $11.4\%$, and $13.2\%$;
\methodname{} gains $3.6$, $3.5$, and $3.3$ percentage points (pp).
The strongest baselines reach $10.7\%$, $12.7\%$, and $14.0\%$, giving margins
of $1.8$, $2.2$, and $2.5$ pp. Section~\ref{sec:components} separates bank and
allocation effects.
At $2$B, $4$B, and $8$B, BRACE exceeds the strongest plotted baseline peaks
using $4{,}000$, $8{,}000$, and $4{,}000$ new rollouts, versus $12{,}000$,
$16{,}000$, and $12{,}000$ at the baselines' earliest peak checkpoints.
These observed comparisons require no interpolation.
\hyperref[fig:stepcurve]{Figure~\ref*{fig:stepcurve}} plots solve rate against training
steps; dashed segments span the measured checkpoints used for the rollout ratios. RAG-MCP \citep{gan2025ragmcp} has the lowest invalid-call rates at $4$B and $8$B;
PLR \citep{jiang2021plr}, VIP \citep{nguyen2026vip}, and TSCL
\citep{matiisen2020tscl} use the fewest turns at $2$B, $4$B, and $8$B.
\hyperref[tab:main]{Table~\ref*{tab:main}} thus shows better task completion without
consistently shorter trajectories for \methodname{}.

\subsection{Generalization}
\label{sec:generalization}

Transfer tests multi-turn function calling on BFCL \citep{patil2025bfcl} and
nested function composition on NESTFUL \citep{basu2024nestful}, without further
adaptation. \hyperref[fig:generalization]{Figure~\ref*{fig:generalization}} shows
percentage-point gains over untrained bases; \methodname{} leads at all three scales.
NESTFUL \citep{basu2024nestful} win rate evaluates the executed answer,
not exact call-sequence matching. Our modified prompt and parsing conventions
support within-study comparisons, not direct comparison with published NESTFUL \citep{basu2024nestful}
scores.

\begin{table}[tb]

\centering\footnotesize
\renewcommand{\arraystretch}{0.95}
\setlength{\tabcolsep}{2.5pt}
\caption{Component comparisons and a uniform-GRPO \citep{shao2024deepseekmath} reference on held-out AWM \citep{wang2026agent} tasks. ``$\Delta$ vs full'' and
``effect vs base'' are percentage-point differences computed from the displayed
solve rates. The full method has the
highest solve rate at all three model scales.}
\vspace{4pt}
\begin{tabular*}{\textwidth}{@{\extracolsep{\fill}}>{\raggedright\arraybackslash}p{0.38\textwidth} rrr rrr rrr@{}}
\toprule
 & \multicolumn{3}{c}{Qwen3-VL-$2$B} & \multicolumn{3}{c}{Qwen3-VL-$4$B} & \multicolumn{3}{c}{Qwen3-VL-$8$B} \\
\cmidrule(lr){2-4}\cmidrule(lr){5-7}\cmidrule(lr){8-10}
ablation & solve & $\Delta$ vs & effect & solve & $\Delta$ vs & effect & solve & $\Delta$ vs & effect \\
 & (\%) & full & vs base & (\%) & full & vs base & (\%) & full & vs base \\
\midrule
\methodname{} (full) & $\mathbf{12.5}$ & -- & $+3.9$ & $\mathbf{14.9}$ & -- & $+5.2$ & $\mathbf{16.5}$ & -- & $+5.1$ \\
\quad w/o warm bank & $9.2$ & $-3.3$ & $+0.6$ & $12.8$ & $-2.1$ & $+3.1$ & $12.2$ & $-4.3$ & $+0.8$ \\
\quad w/o correlation correction & $10.0$ & $-2.5$ & $+1.4$ & $10.3$ & $-4.6$ & $+0.6$ & $14.4$ & $-2.1$ & $+3.0$ \\
\quad estimator: posterior $\to$ plug-in & $10.7$ & $-1.8$ & $+2.1$ & $13.1$ & $-1.8$ & $+3.4$ & $15.2$ & $-1.3$ & $+3.8$ \\
\quad correlation $\rho$: one $\to$ per-scale & $11.3$ & $-1.2$ & $+2.7$ & $13.9$ & $-1.0$ & $+4.2$ & $15.6$ & $-0.9$ & $+4.2$ \\
\quad score $\to$ \textsc{Vip}-style variance & $10.6$ & $-1.9$ & $+2.0$ & $13.2$ & $-1.7$ & $+3.5$ & $14.1$ & $-2.4$ & $+2.7$ \\
\quad reference: uniform \textsc{GRPO} & $8.9$ & $-3.6$ & $+0.3$ & $11.4$ & $-3.5$ & $+1.7$ & $13.2$ & $-3.3$ & $+1.8$ \\
\bottomrule
\end{tabular*}
\par\addvspace{3pt}

\label{tab:configs}
\end{table}

\subsection{Ablation Study}
\label{sec:components}

\hyperref[tab:configs]{Table~\ref*{tab:configs}} shows lower solve rates in every component ablation
at all three scales. Removing the bank costs up to $4.3$ pp; removing correlation
correction costs up to $4.6$ pp. Plug-in estimation, per-scale correlation fitting,
and the VIP-style \citep{nguyen2026vip} variance score also underperform. The score comparison
holds bank counts, estimator, temperature, exploration, and eligibility fixed.
\textbf{Mechanism and cost controls.} The bank reuses episodes from earlier $2$B,
$4$B, and $8$B runs. It is read once
at initialization; no dedicated collection run was launched, although the
historical generation cost remains separate from current-run rollouts. The
VIP-score \citep{nguyen2026vip} and correlation-removal ablations start from the same capped
per-task bank counts as full \methodname{} (Appendix~\ref{app:bank-provenance}).
On $83$ prospective arm-cycles that exclude the cycles used to fit $\rho$,
correlation-aware forecasts reduce mean absolute degeneracy-prediction error
from $0.591$ to $0.056$ (Appendix~\ref{app:calib}). This tests forecast
calibration, while held-out solve rates assess learning; Proposition~\ref{prop:rankcal}
also limits what calibration alone can imply about task ranking.

\section{Related Work}
\label{sec:triage-related}

\noindent\textbf{Task selection and rollout allocation.}
Adaptive curricula change which tasks an agent encounters. PLR prioritizes
levels using TD error \citep{jiang2021plr}, while problem-level RL replay uses
empirical intermediate success rates \citep{fatemi2026prioritized}. Other
selectors use reward dynamics \citep{zheng2025greso}, task difficulty
\citep{qu2025mopps,qu2026gps,gao2025pcl}, estimated learning impact
\citep{li2025limr}, or replay priorities \citep{sun2025dots}. For group-relative
training, DAPO filters all-equal groups after generation \citep{yu2025dapo}.
VIP allocates rollout counts by predicted gradient variance
\citep{nguyen2026vip}; DUET also allocates token budgets \citep{hu2026duet},
and TRACE predicts mixed outcomes at prompts and interaction prefixes using
an independent rollout model \citep{zou2026trace}. \methodname{} instead selects
tasks for fixed-size groups using capped historical evidence, posterior
uncertainty, and a correlated reward model. The correlation correction improves
forecast calibration while preserving known-probability task ranks at fixed
group size (Proposition~\ref{prop:rankcal}); historical evidence and relative
weights therefore matter to the resulting allocation. Policy-gradient
estimators \citep{liu2025drgrpo,ahmadian2024rloo,zheng2025gspo} address a
separate choice.

\noindent\textbf{Tool-using agents and harness search.}
Multi-turn training studies delayed rewards and tool interaction
\citep{zhou2024archer,wang2025ragen,jin2025searchr1,qian2025toolrl}. MCP work
studies protocol architecture and overhead, tool discovery, and agent evaluation
\citep{hou2026mcpsurvey,anjum2026promcp,fei2025mcpzero,guo2026mcpagentbench,wang2025mcpbench}, while
benchmarks measure tool execution and function composition
\citep{patil2025bfcl,basu2024nestful,qin2023toolllm,yao2024taubench}.
Harness-search systems propose edits to prompts, workflows, or agent code
\citep{hu2025adas,zhang2025aflow,agrawal2026gepa,lee2026metaharness,zelikman2024stop,zhang2026dgm}.
\textsc{Discord} addresses the subsequent retention decision with exact
conditional paired inference \citep{mcnemar1947} and decision-preserving
curtailment \citep{alling1963curtailed}; alternative matched-pair tests include
exact unconditional procedures \citep{suissa1991exact}.

\noindent\textbf{Statistical foundations.}
Beta--Binomial models describe clustered binary outcomes
\citep{kleinman1973,crowder1978,williams1975betabin,ridout1999icc} and motivate
\methodname{}'s within-group forecast. Its task selection is related to active
learning \citep{settles2009survey,houlsby2011bald} and adaptive replay
\citep{schaul2016per}, with non-degenerate group probability as the allocation
objective.
\section{Conclusion}
\label{sec:conclusion}

Reward disagreement provides a common lens for allocating evaluation and training
computation. \textsc{Discord} uses exact paired inference and decision-preserving
stopping to retain harness edits when observed disagreements support
improvement. \methodname{} uses capped historical
evidence, posterior uncertainty, and reward dependence to forecast mixed-reward
groups before generation. Across three model scales, \methodname{} achieves higher
observed held-out MCP solve rates than the compared methods,
reaches their strongest plotted peaks with fewer new rollouts, and transfers to
BFCL and NESTFUL without further
training. Together, the methods direct interaction budget toward harness
comparisons that can change retention and rollout groups that can supply a
group-relative training signal.

\label{pagecheck:mainend}
\clearpage

\subsection*{AI use statement}
\label{pagecheck:aiuse}

We used generative AI tools to assist with coding, including implementation,
debugging, and analysis scripts, and with reviewing relevant literature. The
authors directed these uses, checked the AI-assisted code and outputs, and
verified the resulting text and citations. All AI-assisted work was reviewed by
the authors, who take responsibility for the final content.

\section*{Ethics statement}
Improved tool-use efficiency can support beneficial applications but can also
increase the scale of unintended or unauthorized actions. Benchmark success and
lower harmful-edit retention do not establish deployment safety: ``harmful'' here
means reduced benchmark performance, not a comprehensive safety assessment.
Real-world use requires appropriate permissions, execution isolation, monitoring,
and safeguards beyond the reward verifier. Task exclusion and historical-data
reuse may also favor tasks already represented in the bank; our results do not
establish equitable performance across users or application domains.

\section*{Reproducibility statement}
Appendix~\ref{app:experimental} specifies model and training settings, task splits,
evaluation procedures, and checkpoint selection. Appendix~\ref{app:bank-provenance}
describes the source, caps, and initialization of historical evidence.
Allocation forecasts are recorded before the corresponding rollouts are generated;
paired tests use matched task indices. The code repository is available at \url{https://anonymous.4open.science/r/brace-38DC}.

\bibliographystyle{iclr2027_conference}
\bibliography{refs}

\clearpage
\appendix
This appendix discusses limitations and provides experimental protocols,
retention analyses, implementation details, and calibration controls supporting
the main results.

\section{Limitations}
\label{app:limitations}
\methodname{} predicts whether a rollout group has nonzero centered reward, not
whether its gradient improves the policy. The expected-yield interpretation in
Section~\ref{sec:brace-rho} therefore explains the allocation target but does
not establish that informative-group yield causes the observed solve-rate gains.
The bank uses rollouts generated by earlier policies and model scales. Although
no dedicated bank-collection run was launched here, those historical rollouts
have a cost separate from the current-run budget; cold-start or shifted task
pools may provide less useful prior evidence. The experiments use binary verifier
rewards, fixed-size groups,
and the tested MCP and transfer benchmarks; other rewards, group sizes, and tool
environments remain untested.

\textsc{Discord}'s exact test controls a prespecified paired comparison, not an
unrestricted sequence of adaptively proposed edits. Its conservative retention
can increase evaluation cost and miss useful edits under limited evidence, as
the live MCP comparison illustrates. It is therefore a safeguard for individual
retention decisions rather than an optimal harness-search policy.

\section{Experimental setup and evaluation}
\label{app:experimental}
\subsection{Models, training data, and optimization}
\label{app:trainconf}
\label{app:memberfull}
We train Qwen3-VL-Instruct models at $2$B, $4$B, and $8$B
\citep{bai2025qwen3vl}, using GRPO \citep{shao2024deepseekmath} in
\texttt{verl} \citep{sheng2025hybridflow}. The training substrate is Agent World
Model (AWM) \citep{wang2026agent}, whose MCP tools operate on SQLite-backed
state. A binary verifier evaluates the final database state. Training uses
$1{,}127$ tasks across $305$ live servers with a fixed tool surface. The held-out
AWM \citep{wang2026agent} set contains $295$ tasks disjoint from the training pool.

The bank snapshot used for the degeneracy diagnostic contains $482{,}510$ episodes on $1{,}493$ tasks from the same
substrate, including the training pool. Reconstructing the original optimizer steps
and retaining groups with at least two rollouts produces $92{,}706$ groups, of
which $85.2\%$ are degenerate. These bank-wide counts
are distinct from the task count and group composition of an individual run.
Only training-task outcomes initialize the allocator. All $295$ held-out AWM \citep{wang2026agent}
tasks are excluded from the initialization bank as well as from training.

\begin{table}[htbp]
\centering\small
\caption{Core training configuration. Allocation changes task selection before
rollout generation; evaluation uses a separate held-out task set.}
\label{tab:training-details}
\begin{tabularx}{\textwidth}{@{}>{\raggedright\arraybackslash}p{0.43\textwidth}>{\raggedright\arraybackslash}X@{}}
\toprule
Setting & Value \\
\midrule
Model family & Qwen3-VL-Instruct ($2$B, $4$B, $8$B) \\
Training framework and objective & \texttt{verl}, GRPO  \\
LoRA rank & $32$ \\
Prompts per optimizer step & $32$ \\
Rollouts per prompt & $5$ \\
Episodes per optimizer step & $160$ \\
Training pool & $1{,}127$ tasks, $305$ MCP servers \\
Held-out MCP evaluation & $295$ tasks, four evaluation seeds \\
Training replication & One training seed per configuration \\
\bottomrule
\end{tabularx}
\end{table}

\subsection{Baselines and comparison scope}
\label{app:parityfull}
The main comparison includes uniform GRPO \citep{shao2024deepseekmath},
Prioritized Level Replay (PLR) \citep{jiang2021plr}, RAG-MCP
\citep{gan2025ragmcp}, DAPO \citep{yu2025dapo}, Teacher--Student Curriculum
Learning (TSCL) \citep{matiisen2020tscl}, TRACE \citep{zou2026trace}, and VIP
\citep{nguyen2026vip}. These methods provide comparisons to uniform sampling,
replay and curriculum selection, tool retrieval, post-generation filtering, and
pre-generation allocation. The comparisons use the same model,
optimizer, current-run rollout budget, and evaluation protocol as specified in
Section~\ref{sec:triage-protocol}. Table~\ref{tab:main} compares final-checkpoint
performance, and Figure~\ref{fig:generalization} evaluates transfer. BRACE's
historical warm-bank initialization is described in
Appendix~\ref{app:main-allocation}; Table~\ref{tab:configs} evaluates its removal
alongside the other component comparisons.

The comparison unit matters for interpreting efficiency. Removing a degenerate
group after generation does not recover the episodes already executed. Likewise,
a warm bank avoids new probing episodes but requires previously collected data.
We distinguish the generation budget of the current run from historical data
acquisition and from the number of groups ultimately used for optimization.

\subsection{Statistical reporting}
\label{app:protofull}
For two policies evaluated on the same task--seed pairs, let $b$ count pairs won
only by the candidate and $c$ count pairs won only by the reference. The paired
solve-rate difference is $(b-c)/n$, where $n$ includes both concordant and
discordant pairs. Effects expressed in percentage points are $100(b-c)/n$;
a change from $11.4\%$ to $14.9\%$, for example, is $3.5$ pp.
The exact conditional McNemar test \citep{mcnemar1947} evaluates the discordant
count under $b\mid b+c\sim\mathrm{Binomial}(b+c,1/2)$, assuming independent pairs
and an exchangeable paired null. Repeated seeds on the same task need not form
independent pairs; the difference identity remains valid without this assumption,
but exact binomial error control does not.

Holm correction is applied within each registered comparison family. For $m$
ordered unadjusted values $p_{(1)}\le\cdots\le p_{(m)}$, the adjusted value at
rank $j$ is
\[
\widetilde p_{(j)}=\min\!\left\{1,\max_{\ell\le j}(m-\ell+1)p_{(\ell)}\right\}.
\]
Family membership is determined by the registered analysis, rather than by which
contrasts are favorable. Paired significance and absolute performance are reported separately;
the aggregation conventions are specified below.

\subsection{Evaluation and aggregation across runs}
\label{app:fivefull}
The training and evaluation seed counts are specified in Table~\ref{tab:training-details}.
Repeated evaluations characterize checkpoint-level stochasticity; they do not
estimate variability across independently trained policies. Main results use the
final plotted checkpoint, while learning curves retain the intermediate evaluations.

\subsection{Transfer benchmarks and processing}
\label{app:genfull}
BFCL v4 \citep{patil2025bfcl} evaluates multi-turn function calling on $800$ tasks
from four categories: \texttt{base}, \texttt{long\_context},
\texttt{miss\_func}, and \texttt{miss\_param}, with $200$ tasks each.
The official checker evaluates the executed result against the reference trajectory.
The evaluation uses a common task index across arms and one response per task
at temperature $0.001$. Transfer evaluation uses checkpoints from AWM \citep{wang2026agent} training,
without further training on BFCL \citep{patil2025bfcl}. The v4 multi-turn categories are separate from
the v3 single-turn split used for harness-retention experiments in
Appendix~\ref{app:loopfull}.

NESTFUL \citep{basu2024nestful} evaluates nested function composition. A completion
specifies API calls with dependencies, which are executed against the benchmark
implementations. The win-rate metric checks whether execution yields the recorded
answer; it differs from the benchmark's full-sequence matching metric.
Evaluation uses $1{,}861$ tasks, one completion
per task, and the same task index across arms. Two processing changes relative to
the released Llama branch are applied uniformly: function specifications are
embedded as JSON rather than as double-encoded strings, and completions are parsed
from the first balanced top-level \texttt{[...]} span. These changes preserve the
within-study comparison but prevent direct numerical comparison with results
obtained under the benchmark's original processing convention.

Figure~\ref{fig:generalization} reports percentage-point changes in BFCL \citep{patil2025bfcl} pass rate
and NESTFUL \citep{basu2024nestful} win rate relative to each scale's untrained base model, rather than
absolute rates. Neither benchmark supplies training data or task-selection feedback to the
reported AWM \citep{wang2026agent} training runs. Because the benchmarks use different task structures
and success criteria, their rates should be interpreted within each benchmark.

\section{Discord: derivations and retention experiments}
\label{app:discord}
\subsection{Inert-edit admission identity}
\label{app:identity}
Let $D=\bar B-\bar A$ under an inert edit. Exchangeability implies that $D$ and
$-D$ have the same distribution, so $P(D>0)=P(D<0)$. Partitioning the outcome
space into positive, negative, and zero differences yields
\[
1=P(D>0)+P(D<0)+P(D=0),\qquad
P(D>0)=\tfrac12[1-P(D=0)].
\]
This proves Equation~\ref{eq:chance}. The result concerns strict-improvement
retention; a different treatment of ties changes the admission probability.

We measure tie rates and inert-edit admission separately on AWM \citep{wang2026agent}, BFCL
\citep{patil2025bfcl}, and API-Bank \citep{li2023apibank}. The AWM \citep{wang2026agent} diagnostic uses
$6{,}000$ draws per caller ($12$ ordered interface pairs with $500$ draws each).
BFCL \citep{patil2025bfcl} and API-Bank \citep{li2023apibank} use $1{,}500$ group-of-four resampling trials per caller.
These estimators test the same identity at different sampling units; their rates
are not pooled. API-Bank's \citep{li2023apibank} published and $49$-tool surfaces are two settings of
one substrate, not independent benchmarks. Table~\ref{tab:identity-details}
provides the values underlying Figure~\ref{fig:identity}.

\begin{table}[htbp]
\centering
\caption{Independent predictions and measurements of inert-edit admission.
Difference is observed minus predicted admission probability.}
\label{tab:identity-details}
\begin{center}\small
\begin{tabular}{llrrrr}
\toprule
Substrate & Caller & Tie rate & Predicted & Observed & Difference \\
\midrule
AWM \citep{wang2026agent} & qwen    & 0.088 & 0.456 & 0.464 & $+0.008$ \\
    & granite & 0.082 & 0.459 & 0.455 & $-0.004$ \\
    & mistral & 0.149 & 0.426 & 0.427 & $+0.001$ \\
    & falcon3 & 0.107 & 0.447 & 0.446 & $-0.001$ \\
\cmidrule(l){2-6}
    & \emph{mean} & 0.106 & \textbf{0.447} & \textbf{0.448} & $+0.001$ \\
\midrule
BFCL \citep{patil2025bfcl} & qwen    & 0.223 & 0.389 & 0.377 & $-0.012$ \\
     & granite & 0.118 & 0.441 & 0.451 & $+0.010$ \\
     & mistral & 0.067 & 0.466 & 0.460 & $-0.006$ \\
     & falcon3 & 0.110 & 0.445 & 0.442 & $-0.003$ \\
\midrule
API-Bank \citep{li2023apibank} (published) & qwen    & 0.187 & 0.407 & 0.383 & $-0.024$ \\
                     & granite & 0.069 & 0.466 & 0.479 & $+0.013$ \\
                     & mistral & 0.058 & 0.471 & 0.477 & $+0.006$ \\
                     & falcon3 & 0.068 & 0.466 & 0.473 & $+0.007$ \\
\midrule
API-Bank \citep{li2023apibank} (49 tools)  & qwen    & 0.137 & 0.432 & 0.423 & $-0.009$ \\
                     & granite & 0.085 & 0.457 & 0.465 & $+0.008$ \\
                     & mistral & 0.051 & 0.474 & 0.459 & $-0.015$ \\
                     & falcon3 & 0.083 & 0.459 & 0.465 & $+0.006$ \\
\bottomrule
\end{tabular}
\end{center}
\end{table}

\subsection{Calibration and evaluation-budget planning}
\label{app:gatefull}
The admission identity is a property of exchangeability and the decision rule;
it does not establish false-retention control. In the inert-edit AWM \citep{wang2026agent} experiment,
strict-improvement retention accepts $43.3\%$ of
$1{,}440$ edits, and $43.6\%$ of retained edits fail to replicate.
For BFCL \citep{patil2025bfcl} and API-Bank \citep{li2023apibank}, holding the tool surface fixed and changing only random
seeds makes the comparison inert by construction.

Figure~\ref{fig:calib} examines a separate issue: the calibration and power of
candidate evaluation procedures. Both displayed $t$-test variants exceed the
nominal null false-positive rate for three of four callers, whereas the permutation
procedures remain calibrated. Pairing increases power under injected effects.
These observations motivate matched evaluation; the exact guarantee for
\textsc{Discord} follows from its conditional binomial test under the assumptions
in Section~\ref{sec:discord-test}.

\begin{figure}[htbp]
\centering
\includegraphics[width=\textwidth]{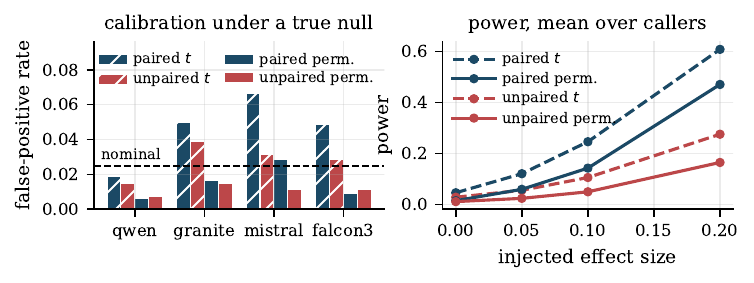}
\caption{Null calibration and injected-effect power across four callers. The
permutation procedures remain calibrated in the displayed diagnostic, and pairing
increases power. These comparisons motivate paired evaluation; \textsc{Discord}'s
error control follows from its specified exact conditional test.}
\label{fig:calib}
\end{figure}

For budget planning, let $p>1/2$ be the probability that a discordant pair favors
the candidate. Under the null, the standard deviation of the discordant win
fraction is $1/(2\sqrt{n_d})$; under the alternative, it is
$\sqrt{p(1-p)/n_d}$. Equating the approximate rejection threshold and the desired
power quantile gives
\[
(p-1/2)\sqrt{n_d}
\approx z_{1-\alpha}/2+z_{1-\beta}\sqrt{p(1-p)},
\]
and hence the planning rule
\begin{equation}
n_d\approx\frac{\bigl[z_{1-\alpha}/2+
 z_{1-\beta}\sqrt{p(1-p)}\bigr]^2}{(p-1/2)^2},
\qquad n_{\mathrm{tasks}}\approx n_d/\pi_d,
\label{eq:budget}
\end{equation}
where $\alpha$ is the one-sided test level, $1-\beta$ is target power, and $z_u$
is the standard-normal quantile at probability $u$. The expected number of tasks needed to
collect $n_d$ discordances is approximately $n_d/\pi_d$, where $\pi_d$ is the
discordance probability. Rare discordance therefore increases evaluation cost
even if the conditional effect is substantial. This normal approximation is used
for planning; retention uses the exact tail probability.

\subsection{Exact curtailment}
\label{app:seq}
Fix the maximum number of evaluation pairs and the terminal significance threshold
before inspecting outcomes. After each evaluation block, consider every feasible
completion of the remaining pairs: a pair can add a candidate win, an incumbent
win, or a concordance. Stop and accept only if every completion would be accepted
by the terminal test; stop and reject only if every completion would be rejected.
Otherwise, evaluate the next block.

For $R$ remaining pairs, this condition can be expressed using
\[
\mathcal C_R=\{(u,v):u,v\in\mathbb Z_{\ge0},\ u+v\le R\},
\]
where a completion yields counts $(b+u,c+v)$. Applying the terminal rule to all
members of $\mathcal C_R$ defines the two stopping decisions exactly.
When there are no discordances, the terminal rule rejects retention.
For $n_d=b+c>0$, the terminal one-sided tail is
\[
p_{\mathrm{tail}}(b,c)=2^{-n_d}\sum_{j=b}^{n_d}\binom{n_d}{j}.
\]
Retention requires $p_{\mathrm{tail}}\le\alpha$ for the prespecified level $\alpha$;
when $n_d=0$, set $p_{\mathrm{tail}}=1$ and reject retention.
The curtailed procedure \citep{alling1963curtailed} returns the same decision as
full evaluation on every completed sequence and thus inherits its error control.
It does not introduce an additional unadjusted test at each block. Savings depend
on effect size and the remaining budget; ambiguous comparisons can require the
full evaluation set.

\subsection{Controlled evolution loops}
\label{app:loopfull}
The controlled loops use the BFCL v3 \citep{patil2025bfcl} \texttt{live\_multiple} split,
restricted to $200$ single-turn tasks with binary
syntax-tree matching. Callers are Qwen3-VL-8B-Instruct
\citep{bai2025qwen3vl}, Granite-3.3-8B-Instruct \citep{ibm2025granite33},
Mistral-7B-Instruct-v0.3 \citep{mistralai2024mistral03}, and
Falcon3-7B-Instruct \citep{tii2024falcon3}, sampled at temperature $0.7$ with eight seeds per task
unless otherwise specified in the run records.

Each loop evaluates eight rounds of three proposed edits. Edits add or remove
blocks of advertised function names. Withholding names at initialization creates
a recoverable handicap; starting from the full surface instead produces a regime
in which proposals are predominantly harmful. The three retention rules consume
matched episode budgets and identical proposals. The untested rule retains a
strict empirical improvement; the single-round rule compares disjoint task halves
with a label-permutation test; \textsc{Discord} uses paired outcomes and its exact
conditional test.

\textsc{Discord} recovers more of the injected handicap in $18$ of $21$ paired
comparisons (Figure~\ref{fig:div}). The one-sided paired $t$-test gives
$p=0.00015$ ($t=4.37$, $20$ degrees of freedom); enumerating all $2^{21}$ sign
assignments gives an exact sign-flip value of $8.5\times10^{-5}$ under the
corresponding symmetry null. The single-round gate overstates realized effects
by $1.8$--$17\times$. In the harmful-proposal regime, \textsc{Discord} preserves
performance in all $17$ matched seeds, whereas strict-improvement retention
degrades in $9$ of $17$. The paired contingency table has nine seeds with
degradation only under strict-improvement retention, none with degradation only
under \textsc{Discord}, and eight with no degradation under either method.
The two-sided exact McNemar value is $2(1/2)^9=0.00390625$.
Mean changes are small, so this result concerns the
frequency of degradation rather than a large difference in mean improvement.

\subsection{\texorpdfstring{Live MCP comparison}{Live MCP comparison}}
\label{app:h2hsetup}
The live comparison uses $320$ AWM \citep{wang2026agent} tasks across $158$ servers,
with a shared proposal stream and a full-surface reference objective
of $0.2813$. The shallow regime withholds $10\%$ of tool names, producing a
recoverable effect of $2.2$ pp; the deep regime withholds $40\%$, producing
$12.7$ pp. Table~\ref{tab:headtohead} reports final objective values, acceptance
counts, and episode cost.

The deployed rule reaches a higher final objective with fewer evaluation episodes but
overstates it by $32\%$ and $47\%$ in the two regimes. \textsc{Discord} provides
a test-based retention decision at greater evaluation cost. This comparison separates
the reliability of reported improvement from the terminal objective achieved by a
search; the method does not dominate both criteria.

\begin{table}[htbp]
\centering\scriptsize
\setlength{\tabcolsep}{4pt}
\caption{Live AWM \citep{wang2026agent} retention comparison. Parenthesized final-objective percentages
are relative to the full-surface reference. Self-reported overstatement is relative
to the measured final objective; a dash indicates that no numerical self-report
is tabulated. Episode counts include evaluation cost.}
\label{tab:headtohead}
\vspace{4pt}
\begin{tabular}{@{}llcccr@{}}
\toprule
regime & rule & accepts (good/bad) & true final & self-reported & episodes \\
\midrule
shallow & deployed & 3 (2/1) & 0.2656 (94\%) & 0.3500 ($+32\%$) & 1,760 \\
shallow & \textsc{Discord} & 0 (0/0) & 0.2594 (92\%) & -- & 1,906 \\
\addlinespace[3pt]
deep & deployed & 3 (3/0) & 0.2812 (100\%) & 0.4125 ($+47\%$) & 1,920 \\
deep & \textsc{Discord} & 2 (2/0) & 0.2703 (96\%) & -- & 4,008 \\
\bottomrule
\end{tabular}

\end{table}

\section{BRACE: posterior computation and implementation}
\label{app:brace-details}
\subsection{Posterior evidence and sampling}
\label{app:main-allocation}
A task's success and failure evidence defines the working Beta approximation
$p_i\sim\mathrm{Beta}(a_i,b_i)$ with $a_i=1+s_i$ and $b_i=1+f_i$.
At initialization, the implementation caps historical evidence separately at
$20$ successful and $40$ failed episodes per task. The bank is incorporated once;
subsequent cycles add local outcomes and apply the same evidence-discount rule.
The registered configuration uses $\gamma=1$ and retains evidence without decay.
Setting $\gamma<1$ would reduce the influence of outcomes from older policies.

A cycle selects $256$ tasks, corresponding to eight optimizer steps of $32$
prompts each, with $k=5$ rollouts per prompt. Weighted sampling uses
$w_i^{1/\tau}$ with $\tau=0.5$ at all three scales. The exploration rate is $\epsilon=0.05$ at $2$B and $4$B,
and $0.10$ at $8$B. The implementation reserves
$n_\epsilon=\mathrm{round}(\epsilon N)$ of the $N=256$ selected rows for uniform
exploration and allocates the rest by weight. The uniform
exploration component samples from eligible tasks not already selected, as in
Algorithm~\ref{alg:brace}. Thus each selected task contributes one group in the
cycle. Task indices, selected groups, and predicted degeneracy are recorded before
the groups are generated.

A plug-in estimator substitutes a single estimated solve probability into the
availability score. Posterior integration averages over uncertainty in that
probability. Two tasks with the same posterior mean can therefore receive different
scores when their evidence counts differ. This is especially relevant early in training, before the
current run has collected substantial task-specific evidence.

\subsection{Bank provenance and cost accounting}
\label{app:bank-provenance}
The default bank reads other runs' episode logs once at initialization and excludes
the current arm's own log, which supplies its live evidence. Historical outcomes
come from policies at $2$B, $4$B, and $8$B. Thus initialization can transfer evidence
about task difficulty across model scales, including from larger to smaller models.
The cumulative audit inventory contains $195$ episode files: $107$ from $8$B,
$47$ from $2$B, and $41$ from $4$B policies. This inventory differs from an
individual run's initialization snapshot, which depends on the files available at
launch. Only eligible training-task outcomes contribute to initialization.

The VIP-score \citep{nguyen2026vip} and correlation-removal ablations use the same capped per-task starting
counts as full BRACE. Their bank-read totals can differ because the available files and redundant
observations differ, while capping produces the same retained evidence. Consequently,
more parsed episodes do not imply a larger effective prior in these comparisons.

The historical corpus and the capped evidence have different sizes. For example,
the recorded bank read for the $4$B arm \texttt{q4bT2} parses $651{,}804$ episodes
from $115$ files and primes $1{,}023$ tasks. Separate caps of $20$ successes and
$40$ failures bound its historical contribution by $61{,}380$ counts, or $9.4\%$
of the parsed episode count. This is an upper bound on retained historical counts;
it is neither the fraction of eligible episodes retained nor a compute-saving
estimate.

The bank reuses episodes generated by other experimental runs; no dedicated
collection run was launched to build it. This avoids additional probing for the
receiving run while retaining the historical collection cost and the benefit of
additional data. A current allocation cycle requests $256\times5=1{,}280$ episodes.
Historical generation, current-run generation, and the one-time bank read are
separate costs.

\subsection{Correlation correction and exact Beta moments}
\label{app:rhofull}
For a group on task $i$, draw a latent probability
$\theta_i\sim\mathrm{Beta}(\nu p_i,\nu(1-p_i))$ and draw its $k$ binary rewards
independently conditional on $\theta_i$. Marginalizing $\theta_i$ induces
within-group correlation $\rho=1/(\nu+1)$. Uncertainty about this group-level
probability is distinct from posterior uncertainty about the task mean $p_i$.

Write the unanimous-success probability as the degree-$k$ polynomial
\[
q(p)=\prod_{j=0}^{k-1}\frac{\nu p+j}{\nu+j}
=\sum_{m=0}^k c_m p^m.
\]
Using rising factorials $(x)^{\overline m}=\prod_{j=0}^{m-1}(x+j)$, with
$(x)^{\overline0}=1$, the score can be computed without Monte Carlo integration:
\[
w_i=1-\sum_{m=0}^k c_m
\frac{(a_i)^{\overline m}+(b_i)^{\overline m}}
     {(a_i+b_i)^{\overline m}}.
\]

\newpage
\subsection{Expected informative-group yield}
\label{app:yield-identity}
Fix the policy and the evidence available at allocation time. Let $I_i$ indicate
that a new group on task $i$ contains both reward values. By definition,
$w_i=\mathbb E[I_i]$ under the predictive model. If one equal-cost group is
generated for each task in a selected set $S$, linearity gives
\[
\mathbb E\left[\sum_{i\in S} I_i\mid S\right]=\sum_{i\in S}w_i.
\]
This holds without assuming independence across selected tasks. Under known
scores, selecting the $N$ largest $w_i$ maximizes the model's expected count of
informative groups among fixed-size sets. \methodname{} uses estimated scores,
weighted sampling, and uniform exploration instead of this deterministic oracle.
The identity interprets its allocation target; it does not establish that the
resulting gradient steps improve held-out solve rate.

\subsection{Eligibility and verifier checks}
\label{app:diagrest}
\label{app:triagefull}
\label{app:bridgefull}
Exclusion affects training allocation only. The $104$ excluded tasks comprise
$66$ initially satisfied verifiers, $33$ objectives ruled out by an initial-state
guard proof, and $5$ tasks identified by a string-matching heuristic. The final
category is not a proof-based certificate. The guard does not fire on any of the
$826$ bank tasks solved at least once. Evaluation and validation pools remain
unchanged.

A verifier that already accepts the initial state does not establish meaningful
task completion by a new trajectory. Conversely, repeated observed failures do not
prove that success is impossible. Structural checks and posterior evidence therefore
serve different roles: the checks determine eligibility, and the posterior ranks
and weights eligible tasks. The non-degeneracy objective in
Section~\ref{sec:brace-obj} motivates weighting both consistently successful and
consistently unsuccessful tasks below tasks likely to produce mixed rewards.

\section{Calibration, rank preservation, and allocation controls}
\label{app:calibration-controls}
\subsection{Prospective forecast evaluation}
\label{app:calib}
For each selected batch, the predicted degenerate fraction is the average of
$1-w_i$ over selected tasks. The observed fraction counts groups whose verifier
rewards are unanimous. Predictions are recorded before generation, using the
shared correlation estimate $\hat\rho=0.78$ specified in
Section~\ref{sec:brace-rho}. The $83$ evaluated arm-cycles exclude both cycles used to fit $\rho$;
the parameter is held fixed during this forecast evaluation. This separation is
distinct from excluding held-out AWM \citep{wang2026agent} tasks from bank initialization.
Mean absolute forecast error is $0.591$ under independence versus $0.056$ with
the correlated model.

Figure~\ref{fig:calibration} retains the diagnostic underlying this comparison.
It compares the two likelihood assumptions on the same cycles and the same
per-task estimates. Its interpretation is prospective batch calibration;
it does not by itself establish improvement in task ordering or downstream
policy quality. Those questions require the rank analysis and the held-out
comparisons, respectively.

\newpage
\begin{figure}[H]
\centering
\includegraphics[width=\textwidth]{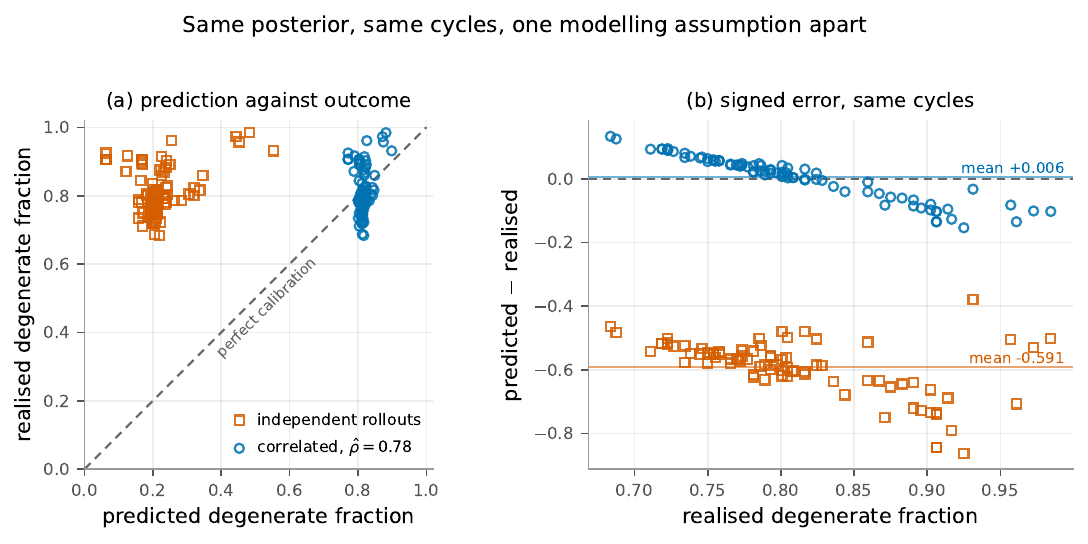}
\caption{Predicted and observed group degeneracy in the recorded $8$B calibration
diagnostic. Left: predicted versus observed fractions. Right: signed forecast
errors. Mean absolute errors are $0.591$ and $0.056$ for the independent and
correlated models, respectively; the signed mean errors annotated in the plot
are $-0.591$ and $+0.006$. The diagnostic contains $83$ arm-cycles from $21$ arms.}
\label{fig:calibration}
\end{figure}

\subsection{Proof of rank preservation}
\label{app:proofs}
For the independent score, differentiation gives
\[
\frac{\partial}{\partial p}[p^k+(1-p)^k]
=k[p^{k-1}-(1-p)^{k-1}]<0,\qquad 0<p<1/2.
\]
Both availability and $u=p(1-p)$ are symmetric about $p=1/2$, and both increase
strictly on $[0,1/2]$. Thus independent availability is strictly increasing in $u$.
For the correlated case, use the function $f$ defined in
Proposition~\ref{prop:rankcal}.
\begin{proof}[Proof of Proposition~\ref{prop:rankcal}]
$f$ is a product of positive increasing linear functions of $p$, so as a polynomial
$f(p)=\sum_m c_m p^m$ it has $c_m\ge0$ with $c_0=0$ and $c_k=\nu^k/\prod_{j<k}(\nu+j)>0$. Then
$f''(p)=\sum_{m\ge2}m(m-1)c_mp^{m-2}>0$ on $(0,1]$ because $c_k>0$ and $k\ge2$, so $f'$ is strictly
increasing and $g_c'(p)=f'(1-p)-f'(p)>0$ for $p<\tfrac12$. As $g_c(p)=g_c(1-p)$ by construction and
$u\mapsto p=(1-\sqrt{1-4u})/2$ is a strictly increasing bijection $[0,\tfrac14]\to[0,\tfrac12]$, the
composition is strictly increasing. At the endpoint $p=\tfrac12$ the derivative in $u$ is $0/0$ with
limit $f''(\tfrac12)$, which is positive for all $\nu>0$,
$k\ge2$ by the same coefficient argument.
\end{proof}

\end{document}